\documentclass[]{article}
\usepackage{proceed2e}
\pdfoutput=1
\usepackage{natbib}
\usepackage{amsmath, amsthm, amssymb}
\usepackage{graphicx,subfigure}
\usepackage{color}
\usepackage{algorithm,algorithmic}
\title{Pitman-Yor Diffusion Trees}

\author{David A. Knowles, Zoubin Ghahramani \\ University of Cambridge}
\begin{document}

\maketitle

\begin{abstract}
We introduce the Pitman Yor Diffusion Tree (PYDT) for hierarchical clustering, a generalization of the Dirichlet Diffusion Tree~\citep{Neal2001b} which removes the restriction to binary branching structure. The generative process is described and shown to result in an exchangeable distribution over data points. We prove some theoretical properties of the model and then present two inference methods: a collapsed MCMC sampler which allows us to model uncertainty over tree structures, and a computationally efficient greedy Bayesian EM search algorithm. Both algorithms use message passing on the tree structure. The utility of the model and algorithms is demonstrated on synthetic and real world data, both continuous and binary. 
\end{abstract}

Tree structures play an important role in machine learning and statistics.  Learning a tree structure over data points gives a straightforward picture of how objects of interest are related.  Trees are easily interpreted and intuitive to understand.  Sometimes we may know that there is a true underlying hierarchy: for example species in the tree of life or duplicates of genes in the human genome, known as paralogs.  Typical mixture models, such as Dirichlet Process mixture models, have independent parameters for each component.  We might expect for example that certain clusters are similar, for example are sub-groups of some large group.  By learning this hierarchical similarity structure, the model can share statistical strength between components to make better estimates of parameters using less data. 

Classical hierarchical clustering algorithms employ a bottom up ``agglomerative'' approach~\citep{Duda2000} which hides the statistical assumptions being made. \citet{Heller2005} use a principled probabilistic model in lieu of a distance metric but simply view the hierarchy as a tree consistent mixture over partitions of the data. If instead a full generative model for both the tree structure and the data is used~\citep{Williams2000,Neal2003a,Teh2008,bleincrp} Bayesian inference machinery can be used to compute posterior distributions over the tree structures themselves. Such models can also be used to learn hierarchies over latent variables~\citep{ihfrm}. 

Both heuristic and generative probabilistic approaches to learning hierarchies have focused on learning binary trees. Although computationally convenient this restriction may be undesirable: where appropriate, arbitrary trees provide a more interpretable, clean summary of the data. Some recent work has aimed to address this.~\cite{BluTehHel2010a} extend~\cite{Heller2005} by removing the restriction to binary trees. However, as for~\citet{Heller2005} the lack of a generative process prohibits modeling uncertainty over tree structures. \cite{Williams2000} allows nonbinary trees by having each node independently pick a parent in the layer above, but requires one to pre-specify the number of layers and number of nodes in each layer. \cite{bleincrp} use the nested Chinese restaurant process to define probability distributions over tree structures. Each data point is drawn from a mixture over the parameters on the path from the root to the data point, which is appropriate for mixed membership models but not standard clustering. An alternative to the PYDT to obtain unbounded trees is given by \cite{adams-ghahramani-jordan-2010a}. They use a nested stick-breaking process to construct the tree, which is then endowed with a diffusion process. Data live at internal nodes of the tree, rather than at leaves as in the PYDT. 

We introduce the Pitman Yor Diffusion Tree (PYDT), a generalization of the Dirichlet Diffusion Tree~\citep{Neal2001b} to trees with arbitrary branching structure. While allowing atoms in the divergence function of the DDT can in principle be used to obtain multifurcating branch points~\citep{Neal2003a}, our solution is both more flexible and more mathematically and computationally tractable. An interesting property of the PYDT is that the implied distribution over tree structures corresponds to the multifurcating Gibbs fragmentation tree~\citep{gibbstree}, known to be the most general process generating exchangeable and consistent trees (here consistency can be understood as coherence under marginalization of subtrees). 

Our contributions are as follows. In Section~\ref{sec:genprocess} we describe the generative process corresponding to the PYDT. In Section~\ref{sec:probtree} we derive the probability of a tree, and in Section~\ref{sec:theory} show some important properties of the process. Section~\ref{sec:model} describes our hierarchical clustering models utilising the PYDT. In Section~\ref{sec:inference} we present an MCMC sampler and a greedy EM algorithm, which we developped for the DDT in~\cite{knowles2011icml}.  We present results demonstrating the utility of the PYDT in Section~\ref{sec:results}. In the supplementary material we describe how to sample from the PYDT. 

\section{Generative process} \label{sec:genprocess}

We will describe the generative process for the data in terms of a diffusion process in fictitious ``time'' on the unit interval. The observed data points (or latent variables) correspond to the locations of the diffusion process at time $t=1$. The first datapoint starts at time 0 at the origin in a $D$-dimensional Euclidean space and follows a Brownian motion with variance $\sigma^2$ until time 1.  If datapoint $1$ is at position $x_1(t)$ at time $t$, the point will reach position $x_1(t+dt) \sim \textrm{N}(x_1(t),\sigma^2 I dt)$ at time $t + dt$.  It can easily be shown that $x_1(t) \sim \textrm{Normal}(0, \sigma^2 I t)$.  The second point $x_2$ in the dataset also starts at the origin and initially follows the path of $x_1$.  The path of $x_2$ will diverge from that of $x_1$ at some time $T_d$ after which $x_2$ follows a Brownian motion independent of $x_1(t)$ until $t=1$, with $x_i(1)$ being the $i$-th data point. The probability of diverging in an interval $[t+dt]$ is determined by a ``divergence function'' $a(t)$ (see Equation~\ref{eq:at} below) which is analogous to the hazard function in survival analysis. 

The generative process for datapoint $i$ is as follows. Initially $x_i(t)$ follows the path of the previous datapoints. If at time $t$ the path of $x_i(t)$ has not diverged, it will diverge in the next infinitesimal time interval $[t,t+dt]$ with probability
\begin{align}
\frac{a(t)\Gamma(m-\beta)dt}{\Gamma(m+1+\alpha)} \label{eq:at}
\end{align}
where $m$ is the number of datapoints that have previously followed the current path and $0\leq \beta \leq 1, \alpha \geq -2\beta$ are parameters of the model. In the special case of integer $\alpha \in \mathbb{N}$ and $\beta = 0$ the probability of divergence reduces to $a(t)dt/[m(m+1)\dots(m+\alpha-1)(m+\alpha)]$. For example for $\alpha=1$ this gives $a(t)dt/[m(m+1)]$, and for $\alpha=\beta=0$ the DDT expression $a(t)dt/m$ is recovered. If $x_i$ does not diverge before reaching a previous branching point, it may either follow one of the previous branches, or diverge at the branch point (adding one to the degree of this node in the tree). The probability of following one of the existing branches $k$ is 
\begin{align}
\frac{b_k - \beta}{m+\alpha} \label{eq:oldblock}
\end{align}
where $b_k$ is the number of samples which previously took branch $k$ and $m$ is the total number of samples through this branch point so far. The probability of diverging at the branch point and creating a new branch is
\begin{align}
\frac{\alpha+\beta K}{m+\alpha} \label{eq:newblock}
\end{align}
where $K$ is the number of branches from this branch point. By summing Equation~\ref{eq:oldblock} over $k=\{1,\dots,K\}$ with Equation~\ref{eq:newblock} we get 1 as required. This reinforcement scheme is analogous to the Pitman Yor process~\citep{Teh2006b} version of the Chinese restaurant process~\citep{crp}. For the single data point $x_i(t)$ this process is iterated down the tree until divergence, after which $x_i(t)$ performs independent Brownian motion until time $t=1$. The $i$-th observed data point is given by the location of this Brownian motion at $t=1$, i.e. $x_i(1)$. 

\section{Probability of a tree} \label{sec:probtree}

We refer to branch points and leaves of the tree as nodes. The probability of generating a specific tree structure with associated divergence times and locations at each node can be written analytically since the specific diffusion path taken between nodes can be ignored. We will need the probability that a new data point does not diverge between times $s < t$ on a branch that has been followed $m$ times by previous data-points. This can straightforwardly be derived from Equation~\ref{eq:at}:
\begin{align}
P\left(\begin{array}{c} \textrm{not diverging} \\ \textrm{in } [s,t] \end{array}\right) = \exp{\left[(A(s) - A(t))\frac{\Gamma(m-\beta)}{\Gamma(m+1+\alpha)}\right]},
\label{eq:notdiverging}
\end{align}
where $A(t) = \int_0^t a(u) du$ is the cumulative rate function. 

Consider the tree of $N=4$ data points in Figure~\ref{fig:sample}. The probability of obtaining this tree structure and associated divergence times is:
\begin{align*}
&e^{-A(t_a)\frac{\Gamma(1-\beta)}{\Gamma(2+\alpha)}} \frac{a(t_a)\Gamma(1-\beta)}{\Gamma(2+\alpha)} \notag \\ 
& \times e^{-A(t_a)\frac{\Gamma(2-\beta)}{\Gamma(3+\alpha)}} \frac{1-\beta}{2+\alpha} e^{-[A(t_a)-A(t_b)]\frac{\Gamma(1-\beta)}{\Gamma(2+\alpha)}} \frac{a(t_b)\Gamma(1-\beta)}{\Gamma(2+\alpha)} \notag \\
& \times e^{-A(t_a)\frac{\Gamma(3-\beta)}{\Gamma(4+\alpha)}} \frac{\alpha+2\beta}{3+\alpha}
 \end{align*}
The first data point does not contribute to the expression. The second point contributes the first line: the first term results from not diverging between $t=0$ and $t_a$, the second from diverging at $t_a$. The third point contributes the second line: the first term comes from not diverging before time $t_a$, the second from choosing the branch leading towards the first point, the third term comes from not diverging between times $t_a$ and $t_b$, and the final term from diverging at time $t_b$. The fourth and final data point contributes the final line: the first term for not diverging before time $t_a$ and the second term for diverging at branch point $a$. 

The component resulting from the divergence and data locations for the tree in Figure~\ref{fig:sample} is
\begin{align*}
& N(x_1;0,\sigma^2) N(x_2;x_a,\sigma^2(1-t_a)) \\ \times & N(x_3;x_b,\sigma^2(1-t_b)) N(x_4;x_a,\sigma^2(1-t_a))
\end{align*}
where each data point has contributed a term. We can rewrite this as:
\begin{align}	
& N(x_a;0,\sigma^2t_a) N(x_b;x_a,\sigma^2(t_b-t_a)) \notag \\ 
 \times & N(x_1;x_b,\sigma^2(1-t_b)) \times N(x_2;x_a,\sigma^2(1-t_a)) \notag \\
 \times & N(x_3;x_b,\sigma^2(1-t_b)) N(x_4;x_a,\sigma^2(1-t_a)) \label{eq:data}
\end{align}
to see that there is a Gaussian term associated with each branch in the tree. 
\begin{figure} 
\centering
    \includegraphics[width=.8\columnwidth,clip,trim=0 0 0 0]{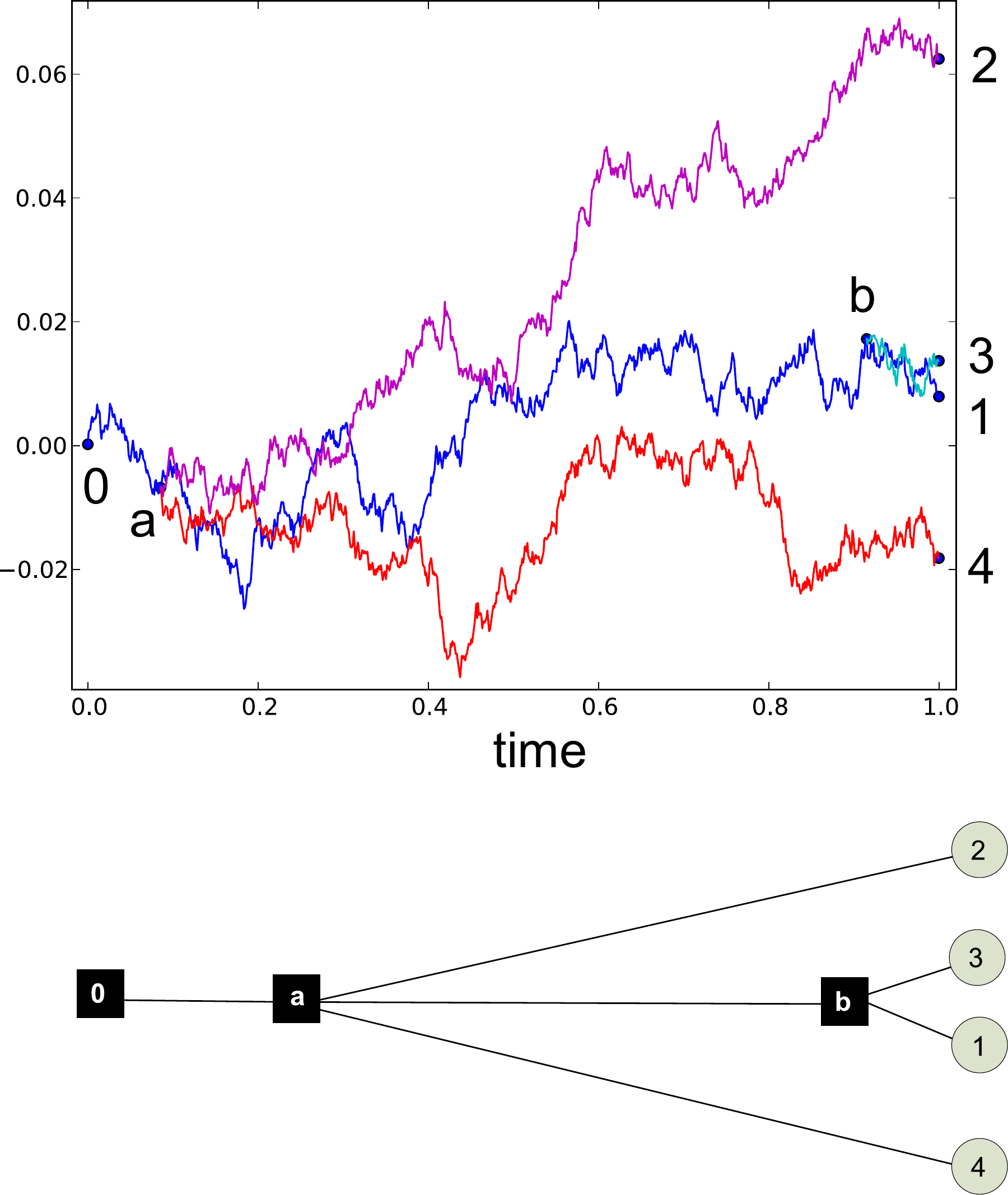}
  \vspace{-.3cm}
  \caption{A sample from the Pitman-Yor Diffusion Tree with $N=4$ datapoints and $a(t)=1/(1-t), \alpha=1, \beta=0$. Top: the location of the Brownian motion for each of the four paths. Bottom: the corresponding tree structure. Each branch point corresponds to an internal tree node.}
  \vspace{-.2cm}
  \label{fig:sample}
\end{figure}

\section{Theory} \label{sec:theory}

Now we present some important properties of the PYDT generative process. 

\newtheorem{lem}{Lemma}
\begin{lem} \label{lem}
The probability of generating a specific tree structure, divergence times, divergence locations and corresponding data set is invariant to the ordering of data points. 
\end{lem}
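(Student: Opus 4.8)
The plan is to prove the stronger statement that the generative probability is invariant under an arbitrary permutation of the arrival order of the $N$ points; since adjacent transpositions generate the symmetric group, it suffices to check that swapping two points $i$ and $i+1$ inserted consecutively leaves the probability unchanged, given that the first $i-1$ points have produced a fixed partial configuration and that both orders terminate in the same tree. Equivalently, I would regroup the sequential product of factors -- one group per point, in arrival order -- into a product of factors attached to the segments and nodes of the final tree, and then show that each such factor depends only on order-invariant data (divergence times, node locations, subtree counts and branch degrees).

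First I would separate the Gaussian data-location factors from the discrete branching factors. The data factors are already handled by the rewriting in Equation~\ref{eq:data}: the likelihood is a product of one Gaussian per branch segment, each depending only on the endpoint locations and divergence times, and is therefore manifestly independent of arrival order. Next I would collect the non-divergence contributions segment by segment. Writing $g(m)=\Gamma(m-\beta)/\Gamma(m+1+\alpha)$, every point after the first that traverses a segment with cumulative-rate difference $A(s)-A(t)$ contributes $\exp[(A(s)-A(t))g(\cdot)]$ with its own predecessor count; because all $M$ points in the subtree below the segment traverse it and none diverges in its interior (the final tree has no interior branch point there), the predecessor counts run over $1,\dots,M-1$ regardless of order, so the segment's total factor is $\exp[(A(s)-A(t))\sum_{j=1}^{M-1}g(j)]$, which is order-invariant.

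The delicate step, and the main obstacle, is the per-node factor. Each internal node contributes one divergence-initiation term $a(t)g(m')$, where $m'$ is the predecessor count of whichever point happened to create the node, together with the reinforcement factors $\frac{b_k-\beta}{m+\alpha}$ (following an existing branch) and $\frac{\alpha+\beta K}{m+\alpha}$ (opening a new branch) contributed by all points that reach the node after it exists. The initiation count $m'$ is genuinely order-dependent -- in one order a node may be created by the second point to reach it ($m'=1$) and in another by the third ($m'=2$) -- so this factor is not individually invariant, and one must show the dependence is exactly cancelled by the accompanying reinforcement factors. The engine is the Gamma recurrence $g(m)\frac{m-\beta}{m+1+\alpha}=g(m+1)$, which shows that absorbing one ``follow'' factor into the initiation term merely increments its argument; combined with the exchangeability of the Pitman--Yor Chinese-restaurant seating rule (which makes the product of reinforcement factors a symmetric function of the branch sizes $\{b_k\}$, the total $m$ and the degree $K$), this collapses the node's entire contribution into a function of $\{b_k\},m,K$ and $t$ alone.

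Concretely, I expect the decisive case in the adjacent-transposition argument to be when both swapped points diverge on the same segment and each creates its own new branch point: there the two orders assign different initiation counts to the lower branch point, and one order contains an extra ``follow'' factor at the upper branch point that the other lacks. Checking this case, the cumulative-rate integrals $A(\cdot)$ recombine additively in the exponent (so the continuous part matches immediately), while the discrete remainder reduces exactly to the identity $g(m)\frac{m-\beta}{m+1+\alpha}=g(m+1)$. Once this representative case and the easier cases (one point following while the other diverges; both points following, or both opening branches, at a pre-existing node) are verified, every factor in the regrouped product depends only on order-invariant quantities and the claim follows.
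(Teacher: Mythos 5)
Your proposal is correct, and its decomposition (per-edge Gaussians, per-edge non-divergence exponents, per-node discrete factors) is exactly the paper's; the difference lies in how the hardest piece, the node factor, is handled. The paper's proof is a direct global regrouping: in Appendix~\ref{app:probstruct} it telescopes \emph{all} the follow and new-branch factors at a node into Gamma functions, obtaining $\prod_{k=3}^{K_b}[\alpha+(k-1)\beta]\,\Gamma(i+\alpha)\prod_l \Gamma(n_l^b-\beta)/[\Gamma(m(b)+\alpha)\,\Gamma(i-1-\beta)]$, so that the dependence on the creator index $i$ cancels in one stroke against the initiation factor $a(t_b)\Gamma(i-1-\beta)/\Gamma(i+\alpha)$, yielding the manifestly order-invariant closed form of Equation~\ref{eq:struct}. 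Your recurrence $g(m)\frac{m-\beta}{m+1+\alpha}=g(m+1)$ is precisely the one-step version of that telescoping, and your decisive transposition case (two points diverging on the same segment, where one order has initiation counts $g(m),g(m)$ plus a follow factor and the other has $g(m),g(m+1)$ with no follow factor) is exactly where the paper's global cancellation does its work; your observation that the $A(\cdot)$ exponents recombine additively likewise reproduces, locally, why Equation~\ref{eq:nd} depends only on the traversal counts $1,\dots,m(b)-1$. What each route buys: the paper's computation immediately delivers the closed-form per-node and per-edge factors (Equations~\ref{eq:struct} and~\ref{eq:times}) that are then reused for inference (e.g.\ the divergence-time prior of Equation~\ref{eq:tprior} and the hyperparameter updates), whereas your transposition argument is more elementary and pinpoints exactly where order-dependence could enter, at the cost of a complete case enumeration (including the slightly hidden case where the second point follows the branch the first point just created, and the check that swapped orders assign the same non-divergence exponents) and without directly producing the closed-form expressions. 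One caution: your appeal to the exchangeability of the Pitman--Yor seating rule is not by itself sufficient at a PYDT node, since the node is initialized with branch sizes $(m',1)$ tied to the order-dependent creator count $m'$ rather than grown from scratch as in a CRP; your proposal survives because you recognize this and supply the Gamma recurrence to cancel the $m'$-dependence, which is the same repair the paper's appendix performs globally.
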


\begin{proof}
The probability of a draw from the PYDT can be decomposed into three components: the probability of the underlying tree structure, the probability of the divergence times given the tree structure, and the probability of the divergence locations given the divergence times. We will show that none of these components depend on the ordering of the data. Consider the tree $\mathcal{T}$ as a set of edges $\mathcal{S(T)}$ each of which we will see contributes to the joint probability density.  The tree structure $\mathcal{T}$ contains the counts of how many datapoints traversed each edge.  We denote an edge by $[ab] \in \mathcal{S(T)}$, which goes from node $a$ to node $b$ with corresponding locations $x_a$ and $x_b$ and divergence times $t_a$ and $t_b$. Let $m(b)$ be the number of samples to have passed through $b$. Denote by $\mathcal{S'(T)}=\{[ab]\in \mathcal{S(T)}:m(b)\geq 2\}$ the set of all edges traversed by $m \geq 2$ samples (for divergence functions which ensure divergence before time $1$ this is the set of all edges not connecting to leaf nodes).

\emph{Probability of the tree structure.}  For segment $[ab]$, let $i$ be the index of the sample which diverged to create the branch point at $b$, thereby contributing a factor 
\begin{align}
\frac{a(t_b)\Gamma(i-1-\beta)}{\Gamma(i+\alpha)}. \label{eq:at2}
\end{align}
Let the number of branches from $b$ be $K_b$, and the number of samples which followed each branch be $\{n_k^b : k \in [1 \dots K_b]\}$. The total number of datapoints which  traversed edge $[ab]$ is $m(b)=\sum_{j=1}^{K_b} n_k^b$. It can be shown (see Appendix~\ref{app:probstruct}) that the factor associated with this branching structure for the data points after $i$ is
\begin{align*}
\frac{\prod_{k=3}^{K_b} [ \alpha + (k-1) \beta] \Gamma(i+\alpha) \prod_{l=1}^{K_b} \Gamma(n_l^b-\beta)}
{ \Gamma(i-1+\beta) \Gamma(m(b)+\alpha) }
\end{align*}
Multiplying by the contribution from data point $i$ in Equation~\ref{eq:at2} we have
\begin{align}
 \frac{a(t_b) \prod_{k=3}^{K_b} [ \alpha + (k-1) \beta] \prod_{l=1}^{K_b} \Gamma(n_l^b-\beta)}
{  \Gamma(m(b)+\alpha) } \label{eq:struct}
\end{align}
Each segment $[ab] \in \mathcal{S'(T)}$ contributes such a term. Since this expression does not depend on the ordering of the branching events, the overall factor does not either. 

\emph{Probability of divergence times.} The $m(b)-1$ points that followed the first point along this path did not diverge before time $t_b$ (otherwise $[ab]$ would not be an edge), which from Equation~\ref{eq:notdiverging} we see contributes a factor
\begin{align}
&\prod_{i = 1}^{m(b)-1} \exp\left[(A(t_a) - A(t_b)) \frac{\Gamma(i-\beta)}{\Gamma(i+1+\alpha)}\right] \notag \\
&= \exp{\left[(A(t_a) - A(t_b)) H^{\alpha,\beta}_{m(b)-1}\right]} \label{eq:nd}
\end{align}
where we define $H^{\alpha,\beta}_n=\sum_{i=1}^n \frac{\Gamma(i-\beta)}{\Gamma(i+1+\alpha)}$.  All edges $[ab] \in \mathcal{S'(T)}$ contribute the expression in Equation~\ref{eq:nd}, resulting in a total contribution
\begin{align}
 \prod_{[ab] \in \mathcal{S'(T)}}  \exp{\left[(A(t_a) - A(t_b)) H^{\alpha,\beta}_{m(b)-1}\right]}
 \label{eq:times}
\end{align}
This expression does not depend on the ordering of the datapoints. 

\emph{Probability of node locations.} Generalizing Equation~\ref{eq:data} it is clear that each edge contributes a Gaussian factor, resulting an overall factor:
\begin{align}
 \prod_{[ab] \in \mathcal{S(T)}} \textrm{N}(x_b; x_a, \sigma^2 (t_b - t_a) I) \label{eq:locs}
\end{align}

The overall probability of a specific tree, divergence times and node locations is given by the product of Equations~\ref{eq:struct},~\ref{eq:times} and~\ref{eq:locs}, none of which depend on the ordering of the data. 
\end{proof}

The term $\prod_{k=3}^{K_b} [ \alpha + (k-1) \beta]$ in Equation~\ref{eq:struct} can be calculated efficiently depending on the value of $\beta$. For $\beta=0$ we have $\prod_{k=3}^{K_b} \alpha = \alpha ^ {K-2}$. 
For $\beta \neq 0$ we have
\begin{align*}
\prod_{k=3}^{K_b} [ \alpha + (k-1) \beta] &= \beta^{K_b-2} \prod_{k=3}^{K_b} [ \alpha/\beta + (k-1) ] \\
&= \frac{\beta^{K_b-2}\Gamma(\alpha/\beta + K_b)}{\Gamma(\alpha/\beta+2)}
\end{align*}

\newtheorem{thm}{Theorem}
\begin{thm}
The Pitman-Yor Diffusion Tree defines an infinitely exchangeable distribution over data points. 
\end{thm}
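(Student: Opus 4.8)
The plan is to establish infinite exchangeability in two stages: first, finite exchangeability of every $N$-point marginal, and second, projective consistency of these marginals as $N$ grows. Together these give, via the Kolmogorov extension theorem, a well-defined distribution over infinite sequences all of whose finite marginals are exchangeable.

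For the finite stage I would leverage Lemma~\ref{lem} directly. The marginal density of the observed data is obtained from the full joint density over tree structure, divergence times, node locations and data by summing over topologies and integrating out the divergence times $\mathbf{t}$ and the internal node locations $\mathbf{x}_{\mathrm{int}}$:
\[
p(x_1,\dots,x_N)=\sum_{\mathcal T}\int p(\mathcal T,\mathbf t,\mathbf x_{\mathrm{int}},x_1,\dots,x_N)\,d\mathbf t\,d\mathbf x_{\mathrm{int}}.
\]
Lemma~\ref{lem} shows the integrand factorizes into the product of Equations~\ref{eq:struct},~\ref{eq:times} and~\ref{eq:locs}, each of which depends on the data only through the topology, the unordered branch counts, the times and the locations, and never on the insertion order. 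To turn this into exchangeability I would exhibit, for an arbitrary permutation $\sigma$, the measure-preserving bijection on the latent space that simultaneously relabels the leaves of $\mathcal T$ and the data indices; this bijection carries the configuration for $(x_1,\dots,x_N)$ to the configuration for $(x_{\sigma(1)},\dots,x_{\sigma(N)})$ while leaving the joint density unchanged. Since the set of tree topologies is closed under leaf relabeling and the reference measure over times and locations is symmetric, the sum and integral are invariant, yielding $p(x_1,\dots,x_N)=p(x_{\sigma(1)},\dots,x_{\sigma(N)})$.

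For the projective stage I would argue directly from the incremental nature of the generative process. The locations $x_1,\dots,x_N$ are produced by exactly the same sequence of steps whether or not an $(N+1)$-th point is subsequently added; inserting $x_{N+1}$ can only refine the tree and can never alter the already-realized values of the first $N$ points. Hence marginalizing $x_{N+1}$ out of the $(N+1)$-point distribution returns the $N$-point distribution, which is precisely the consistency condition required to glue the finite marginals together.

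I expect the main obstacle to be bookkeeping rather than any deep analytic difficulty: one must carefully match the relabeling of data indices to the relabeling of leaves so that the invariance asserted by Lemma~\ref{lem} transfers cleanly to the marginal \emph{after} summing over all topologies. In particular I would verify that the bijection introduces no Jacobian or combinatorial correction factor, which the product form of Equation~\ref{eq:struct}, depending only on the unordered counts $\{n_l^b\}$, makes transparent.
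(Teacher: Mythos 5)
Your proposal is correct and follows essentially the same route as the paper, whose proof is a one-line appeal to Lemma~\ref{lem}: sum over tree structures and integrate over divergence times and locations, and permutation invariance of the joint density yields exchangeability of the marginal. Your added scaffolding --- the explicit leaf-relabeling bijection and the projectivity argument from the sequential generative process (needed to pass from finite to infinite exchangeability) --- simply fills in steps the paper leaves implicit, and both are sound.
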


\begin{proof}
Summing over all possible tree structures, and integrating over all branch point times and locations, by Lemma~\ref{lem} we have infinite exchangeability. 
\end{proof}

\newtheorem{cor}{Corollary}
\begin{cor}
There exists a prior $\nu$ on probability measures on $\mathbb{R}^D$ such that the samples $x_1,x_2,\dots$ generated by a PYDT are conditionally independent and identically distributed (iid) according to $\mathcal{F}\sim\nu$, that is, we can represent the PYDT as
$$PYDT(x_1,x_2,\dots)=\int \left( \prod_i \mathcal{F}(x_i) \right) d\nu(\mathcal{F})$$. 
\end{cor}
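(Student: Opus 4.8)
The plan is to recognize this corollary as an immediate consequence of de Finetti's representation theorem, since the preceding Theorem has already established the infinite exchangeability of the sequence $x_1,x_2,\dots$. De Finetti's theorem states precisely that an infinitely exchangeable sequence of random variables admits a representation as a mixture of iid sequences: there exists a random probability measure $\mathcal{F}$, with law $\nu$, such that conditioned on $\mathcal{F}$ the $x_i$ are iid with distribution $\mathcal{F}$. This is exactly the integral representation $\int \left(\prod_i \mathcal{F}(x_i)\right) d\nu(\mathcal{F})$ in the statement, so the whole task reduces to checking that the hypotheses of the representation theorem are genuinely met.

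First I would make precise what ``infinitely exchangeable'' buys us. The Lemma gives finite exchangeability: for every $N$ the joint density of $(x_1,\dots,x_N)$, together with the tree, times and locations, is permutation invariant. To invoke de Finetti I need a single distribution on the infinite sequence $(x_1,x_2,\dots)\in(\mathbb{R}^D)^\infty$ that is invariant under all finite permutations. I would therefore first verify projective consistency: that marginalizing out $x_{N+1}$ from the $(N+1)$-point PYDT recovers the $N$-point PYDT. Given this, Kolmogorov's extension theorem defines a law on the infinite product space, and the finite exchangeability of the Lemma lifts to exchangeability of that law.

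I would then apply the appropriate version of the representation theorem. The classical de Finetti theorem is stated for $\{0,1\}$- or real-valued sequences; since here the $x_i$ take values in $\mathbb{R}^D$ I would appeal to the Hewitt--Savage generalization, which is valid for sequences valued in any standard Borel (Polish) space. As $\mathbb{R}^D$ is Polish, the hypotheses are satisfied and the theorem furnishes the desired directing random measure $\mathcal{F}\sim\nu$.

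The hard part will be the projective consistency check rather than the invocation of de Finetti itself, which is immediate once consistency and exchangeability are in hand. Consistency amounts to showing that integrating out the last data point --- including summing over how its path could have attached to the tree and integrating over any divergence time and location it would have introduced --- returns exactly the law of the smaller tree. This is the \emph{coherence under marginalization of subtrees} alluded to in the introduction, and is precisely the content tying the PYDT to the Gibbs fragmentation tree. I expect it can be established using the same edge-factorization as in the Lemma, by verifying that removing one leaf leaves the product over the remaining edges distributionally unchanged; alternatively, the fact that the PYDT is defined by a single sequential generative process already guarantees a well-defined $N$-point marginal of the $(N+1)$-point process, which combined with the Lemma's order-independence suffices.
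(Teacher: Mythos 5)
Your proposal is correct and takes essentially the same route as the paper, whose proof is simply the one-line invocation of de Finetti's theorem given the infinite exchangeability established by the preceding Theorem and Lemma. The extra care you take --- noting that $\mathbb{R}^D$-valued sequences require the Hewitt--Savage/Polish-space version, and that projective consistency (which, as you observe, is automatic because the PYDT is defined by a sequential generative process whose first $N$ points do not depend on later ones) plus Kolmogorov extension is what licenses the infinite-sequence law --- is sound due diligence that the paper leaves implicit, not a different argument.
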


\begin{proof}
Since the PYDT defines an infinitely exchangeable process on data points, the result follows directly by de Finetti's Theorem~\citep{definetti}. 
\end{proof}

Another way of expressing Corollary 1 is that data points $x_1,\dots,x_N$ sampled from the PYDT could equivalently have been sampled by first sampling a probability measure $\mathcal{F}\sim \nu$, then sampling $x_i \sim \mathcal{F}$ iid for all $i$ in $\{1,\dots,N\}$. For divergence functions such that $A(1)$ is infinite, the probability measure $\mathcal{F}$ is continuous almost surely. 

\begin{lem}
The PYDT reduces to the Diffusion Dirichlet Tree~\citep{Neal2001b} in the case $\alpha=\beta=0$.  
\end{lem}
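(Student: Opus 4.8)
The plan is to verify that each of the three factors identified in the proof of Lemma~\ref{lem} --- the tree-structure term (Equation~\ref{eq:struct}), the divergence-time term (Equation~\ref{eq:times}), and the node-location term (Equation~\ref{eq:locs}) --- collapses to its Dirichlet Diffusion Tree counterpart when $\alpha=\beta=0$. The location factor in Equation~\ref{eq:locs} contains neither parameter and so is automatically unchanged, so the substantive work lies in the structure and timing terms. I would begin at the level of the generative process: substituting $\alpha=\beta=0$ into Equation~\ref{eq:newblock} gives $\frac{\alpha+\beta K}{m+\alpha}=0$, so a data point can never diverge at an existing branch point. Every internal node therefore has exactly $K_b=2$ children, recovering the binary trees of the DDT, while Equations~\ref{eq:at} and~\ref{eq:oldblock} reduce to the familiar $a(t)\,dt/m$ and $b_k/m$ (the former is already noted in the text just after Equation~\ref{eq:at}).

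The key computation is the structure term. With $K_b=2$ the product $\prod_{k=3}^{K_b}[\alpha+(k-1)\beta]$ is empty and equals $1$, and with $\beta=0$ the surviving Gamma functions become factorials: $\prod_{l=1}^{2}\Gamma(n_l^b)=(n_1^b-1)!\,(n_2^b-1)!$ and $\Gamma(m(b))=(m(b)-1)!$ with $m(b)=n_1^b+n_2^b$. The per-branch-point factor in Equation~\ref{eq:struct} thus simplifies to $a(t_b)\,(n_1^b-1)!\,(n_2^b-1)!/(m(b)-1)!$, which is precisely the per-branch-point factor (divergence rate times combinatorial weight) appearing in the DDT tree-structure density. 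For the timing term I would evaluate the harmonic-type sum at the boundary: $H^{0,0}_n=\sum_{i=1}^n \Gamma(i)/\Gamma(i+1)=\sum_{i=1}^n 1/i=H_n$, the ordinary $n$-th harmonic number, so Equation~\ref{eq:times} becomes $\prod_{[ab]}\exp[(A(t_a)-A(t_b))\,H_{m(b)-1}]$, matching the DDT divergence-time density.

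As a consistency check I would note that binary branching can be read directly off Equation~\ref{eq:struct} without appeal to the generative process: any node with $K_b\geq 3$ contributes a factor $\prod_{k=3}^{K_b}[\alpha+(k-1)\beta]$ in which every term equals $\alpha+(k-1)\cdot 0=0$ at $\alpha=\beta=0$, so non-binary trees receive zero density, leaving exactly the binary trees of the DDT with positive weight. There is no genuine obstacle here; the only point requiring care is the empty-product convention together with the fact that, once $K_b=2$ is imposed, every Gamma argument appearing in Equation~\ref{eq:struct} is a positive integer, so the passage to factorials is legitimate. Assembling the three reduced factors reproduces exactly the joint density of the Dirichlet Diffusion Tree, which establishes the reduction.
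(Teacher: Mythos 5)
Your proposal is correct and takes essentially the same route as the paper: the paper's proof is precisely your generative-process argument (Equation~\ref{eq:newblock} vanishing at $\alpha=\beta=0$, so divergence at an existing branch point has zero probability, together with the rate in Equation~\ref{eq:at} reducing to $a(t)\,dt/m$), and your density-level reduction of Equations~\ref{eq:struct} and~\ref{eq:times} to $a(t_b)(n_1^b-1)!(n_2^b-1)!/(m(b)-1)!$ and $H^{0,0}_n=H_n$ reproduces the verification the paper gives immediately after its proof. The one detail the paper makes explicit that you leave implicit is the assumption that the cumulative divergence function $A(t)$ is continuous, which rules out multifurcations arising from atoms in $a(t)$ (as in Neal's variant of the DDT) rather than from the Pitman--Yor reinforcement at branch points.
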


\begin{proof}
This is clear from the generative process: for $\alpha=\beta=0$ there is zero probability of branching at a previous branch point (assuming continuous cumulative divergence function $A(t)$). The probability of diverging in the time interval $[t,t+dt]$ from a branch previously traversed by $m$ datapoints becomes:
\begin{align}
\frac{a(t)\Gamma(m-0)dt}{\Gamma(m+1+0)} = \frac{a(t)(m-1)!dt}{m!} = \frac{a(t)dt}{m} 
\end{align}
as for the DDT.
\end{proof}

It is straightforward to confirm that the DDT probability factors are recovered when $\alpha=\beta=0$. In this case $K=2$ since non-binary branch points have zero probability, so Equation~\ref{eq:struct} reduces as follows:
\begin{align*}
 \frac{a(t_b) \prod_{l=1}^{K=2} \Gamma(n_l^b-0)}{  \Gamma(m(b)+0) } =\frac{a(t_b) (b_1-1)!(b_2-1)!}{(m(b)-1)!}
\end{align*}
as for the DDT. Equation~\ref{eq:times} also reduces to the DDT expression since 
\begin{align*}
H^{0,0}_n=\sum_{i=1}^n \frac{\Gamma(i-0)}{\Gamma(i+1+0)} = \sum_{i=1}^n \frac{(i-1)!}{i!} = \sum_{i=1}^n \frac1{i} = H_n
\end{align*}
where $H_n$ is the $n$-th Harmonic number. 

For the purpose of this paper we use the divergence function $a(t) = \frac{c}{1-t}$, with ``smoothness'' parameter $c>0$. Larger values $c>1$ give smoother densities because divergences typically occur earlier, resulting in less dependence between the datapoints. Smaller values $c<1$ give rougher more ``clumpy'' densities with more local structure since divergence typically occurs later, closer to $t=1$. For this divergence function we have $A(t)=-c\log{(1-t)}$. 

Equation~\ref{eq:times} factorizes into a term for $t_a$ and $t_b$. Collecting such terms from the branches attached to an internal node $b$ the factor for $t_b$ for the divergence function $a(t)=c/(1-t)$ is
\begin{align}
P(t_b|\mathcal{T}) &= a(t_b)\exp{ \left[ A(t_b) \left( \sum_{k=1}^{K_b} H^{\alpha,\beta}_{n_k^b-1} - H^{\alpha,\beta}_{m(b)-1} \right) \right] } \notag \\
&= c (1-t_b) ^ { c J^{\alpha,\beta}_{\mathbf{n}^b} - 1 } \label{eq:tprior}
\end{align}
where $J^{\alpha,\beta}_{n^b}=H^{\alpha,\beta}_{\sum_{k=1}^{K} n^b_k-1} - \sum_{k=1}^{K} H^{\alpha,\beta}_{n^b_k-1}$ with $\mathbf{n}^b \in \mathbb{N}^K$. 

This generalization of the DDT allows non-binary tree structures to be learnt. By varying $\alpha$ we can move between flat (large $\alpha$) and hierarchical clusterings (small $\alpha$), as shown in Figure~\ref{fig:varyalpha}. 

\begin{figure} 
\centering
    \includegraphics[width=.8\columnwidth,clip,trim=0 0 0 0]{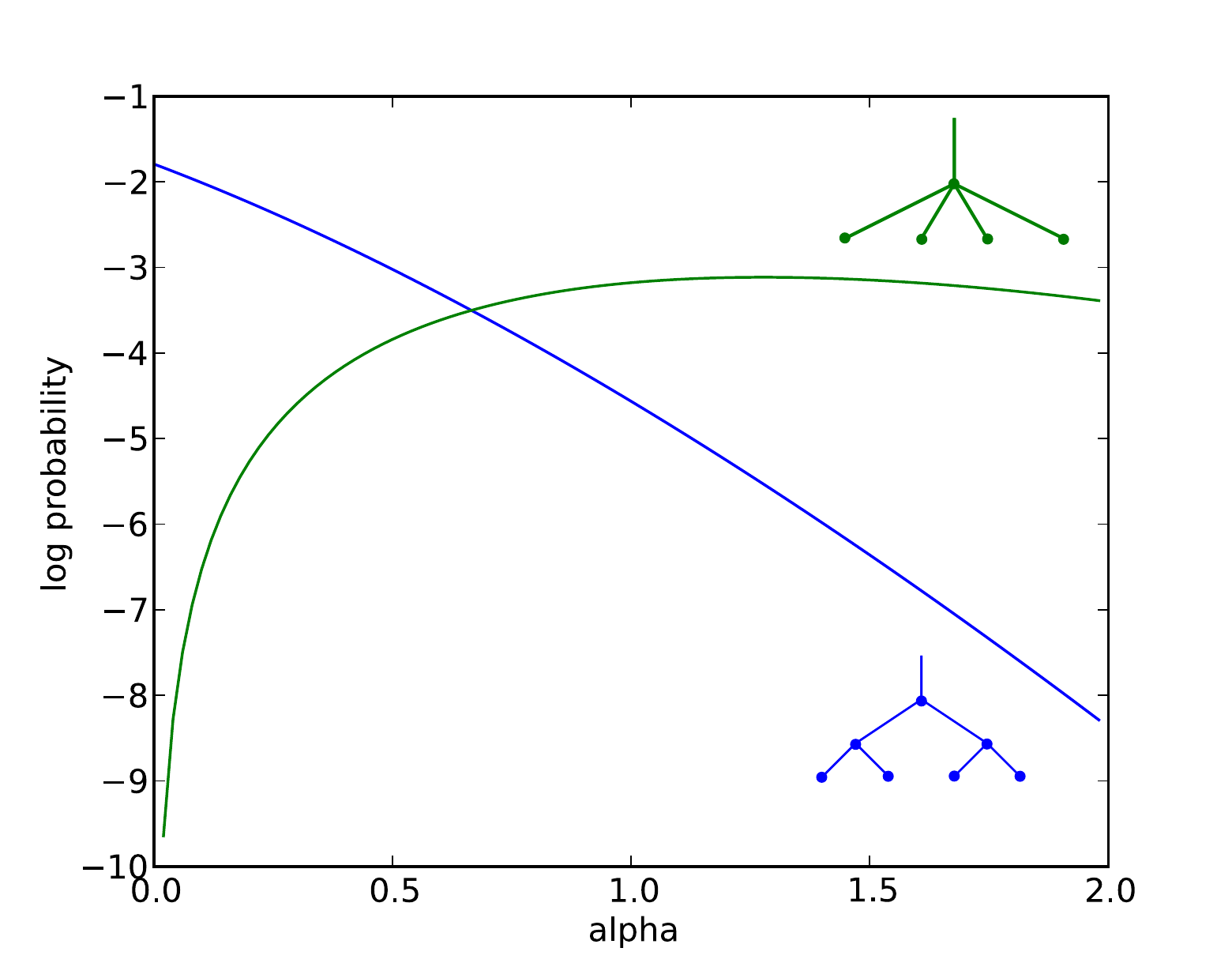}
  \vspace{-.3cm}
  \caption{The effect of varying $\alpha$ on the log probability of two tree structures, indicating the types of tree preferred. Small $\alpha < 1$ favors binary trees while larger values of $\alpha$ favors higher order branching points.}
  \vspace{-.2cm}
  \label{fig:varyalpha}
\end{figure}

\begin{figure} 
\centering
    \includegraphics[width=.8\columnwidth,clip,trim=0 0 0 0]{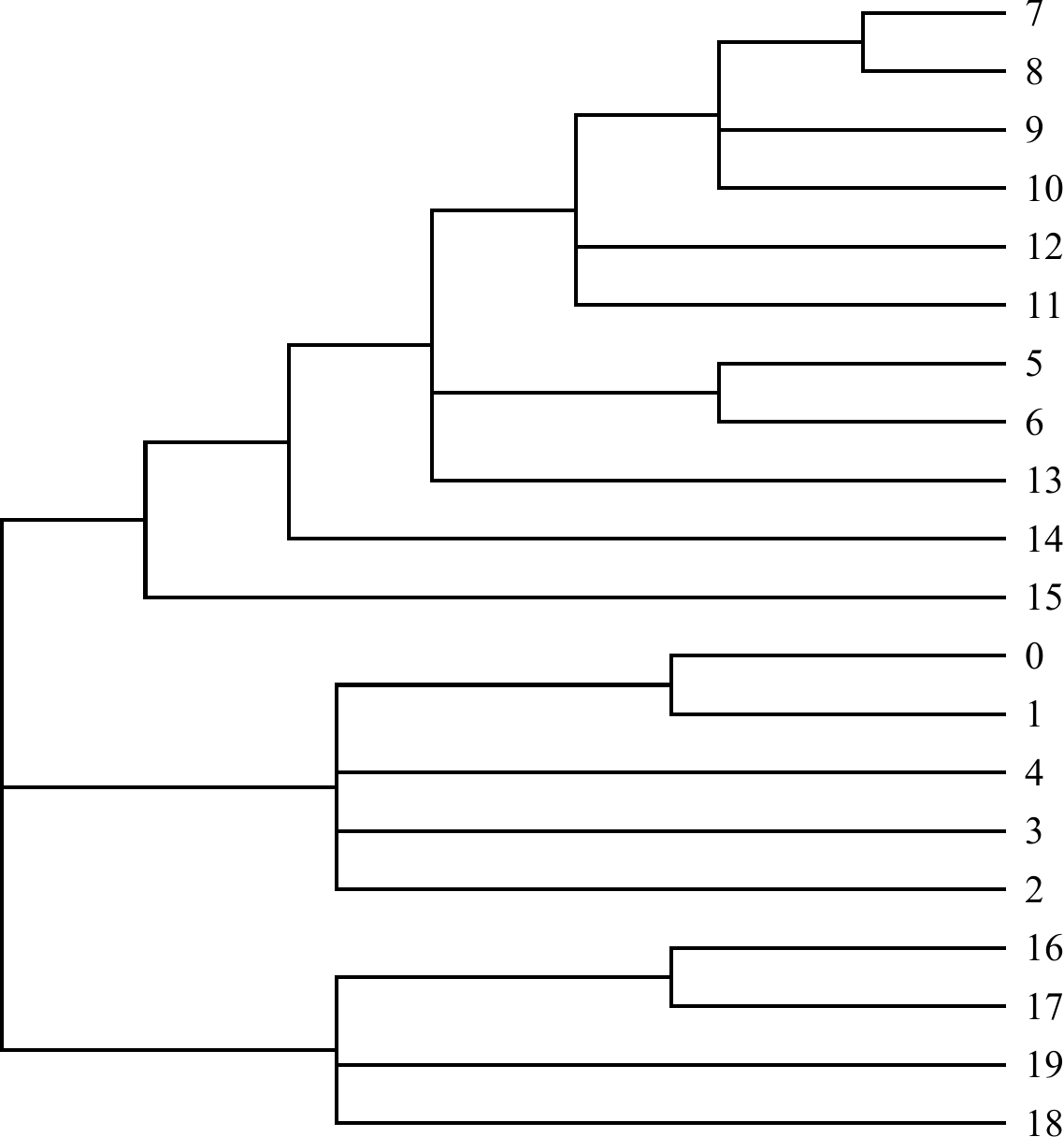}
  \vspace{-.3cm}
  \caption{A sample from the Pitman-Yor Diffusion Tree with $N=20$ datapoints and $a(t)=1/(1-t), \alpha=1, \beta=0$ showing the branching structure including non-binary branch points.}
  \vspace{-.2cm}
  \label{fig:treeN20c1a1b0}
\end{figure}

\begin{figure} 
\centering
    \subfigure[$c=1,\alpha=0,\beta=0$ (DDT)]
    {\includegraphics[width=.45\columnwidth]{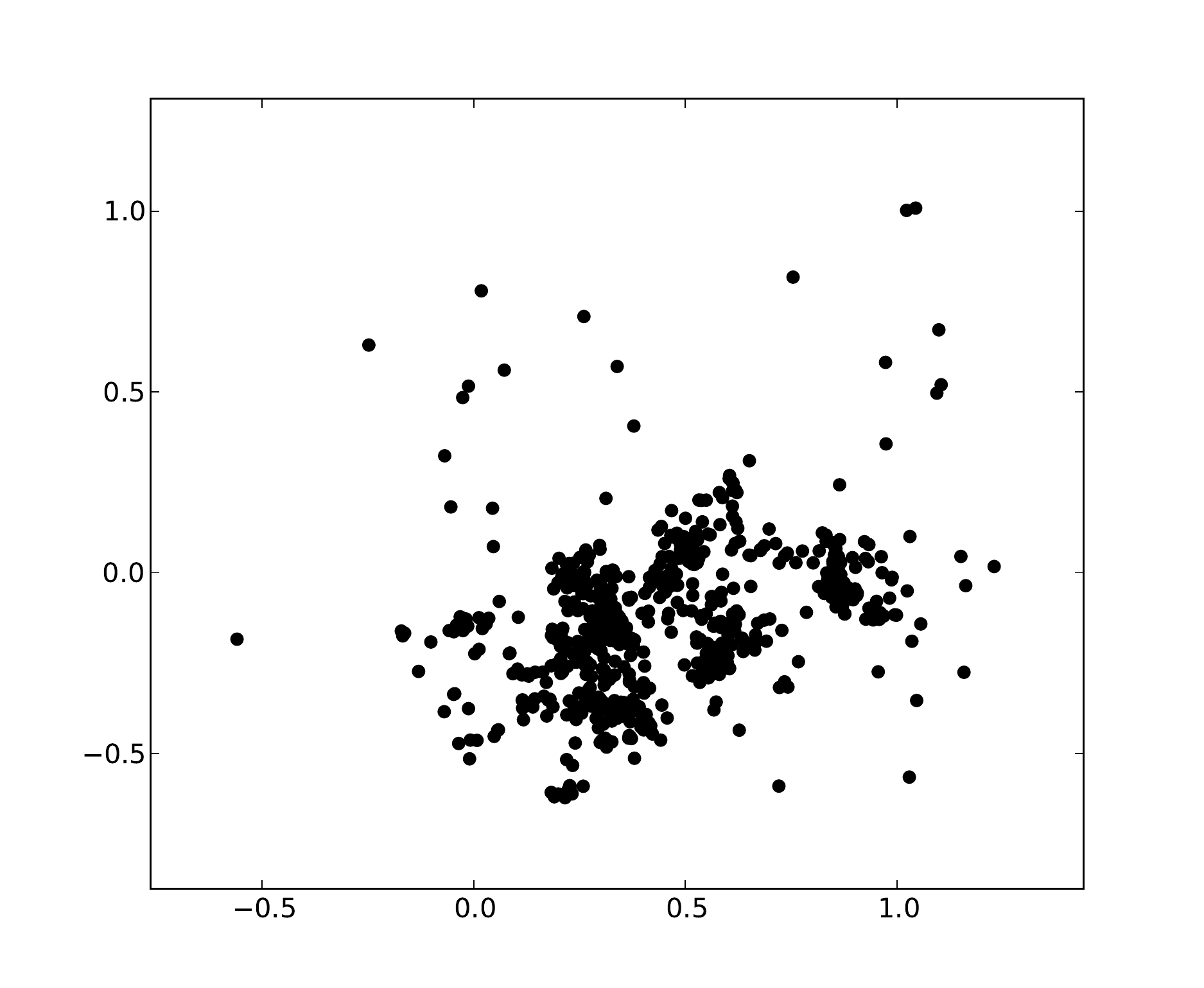}}
    \subfigure[$c=1,\alpha=0.5,\beta=0$]
    {\includegraphics[width=.45\columnwidth]{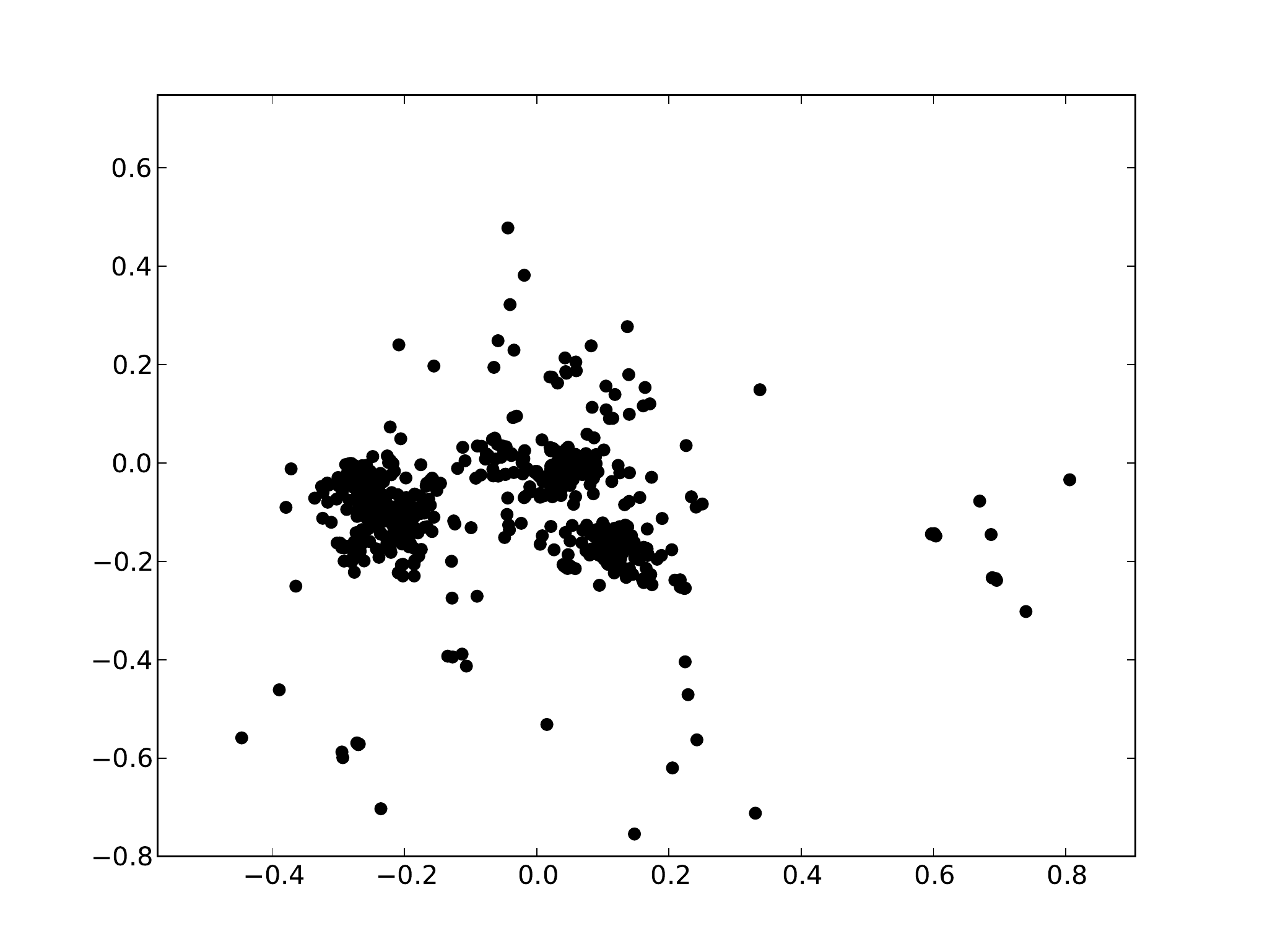}}
        \subfigure[$c=1,\alpha=1,\beta=0$]
    {\includegraphics[width=.45\columnwidth]{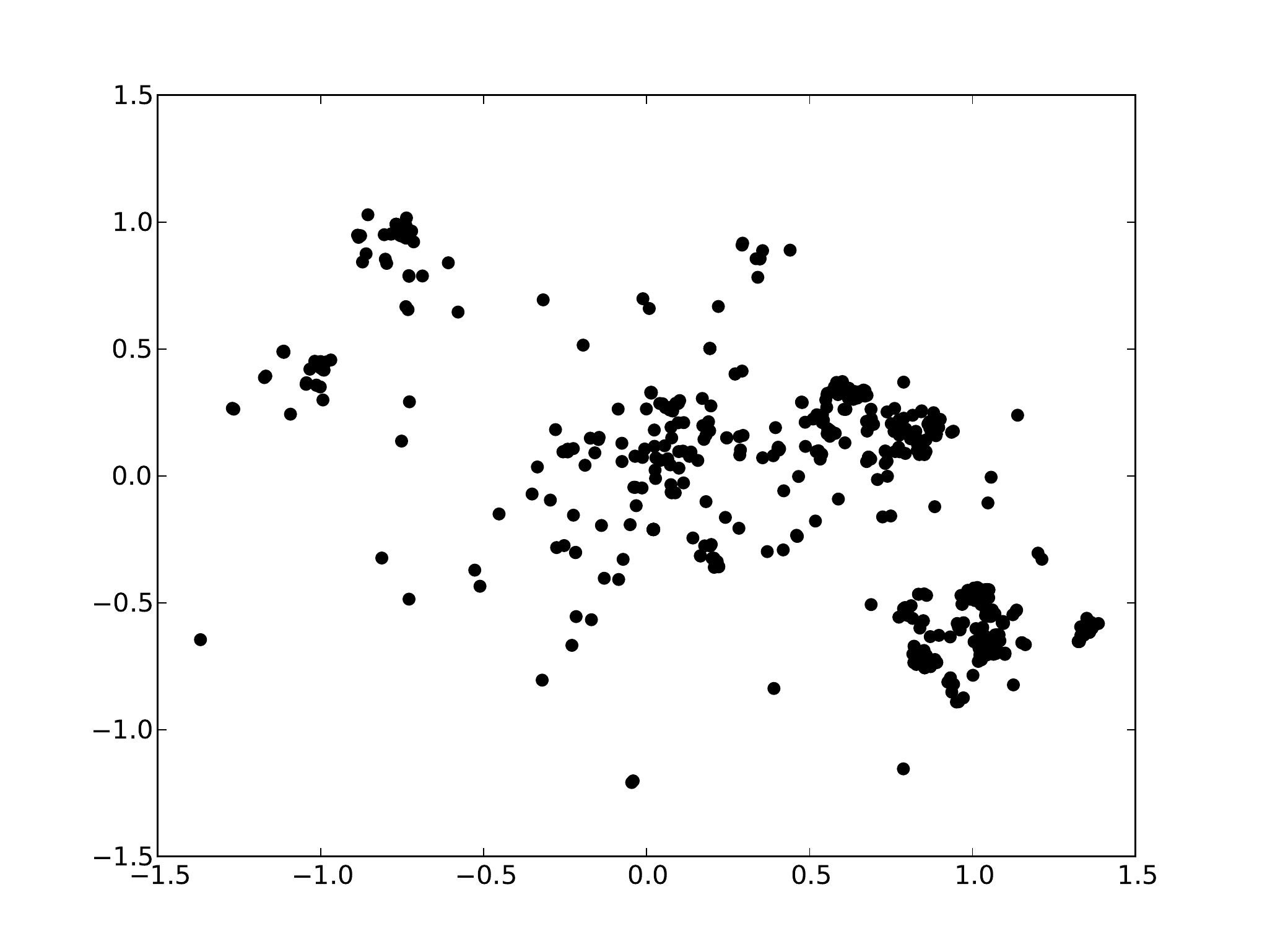}}
        \subfigure[$c=3,\alpha=1.5,\beta=0$]
    {\includegraphics[width=.45\columnwidth]{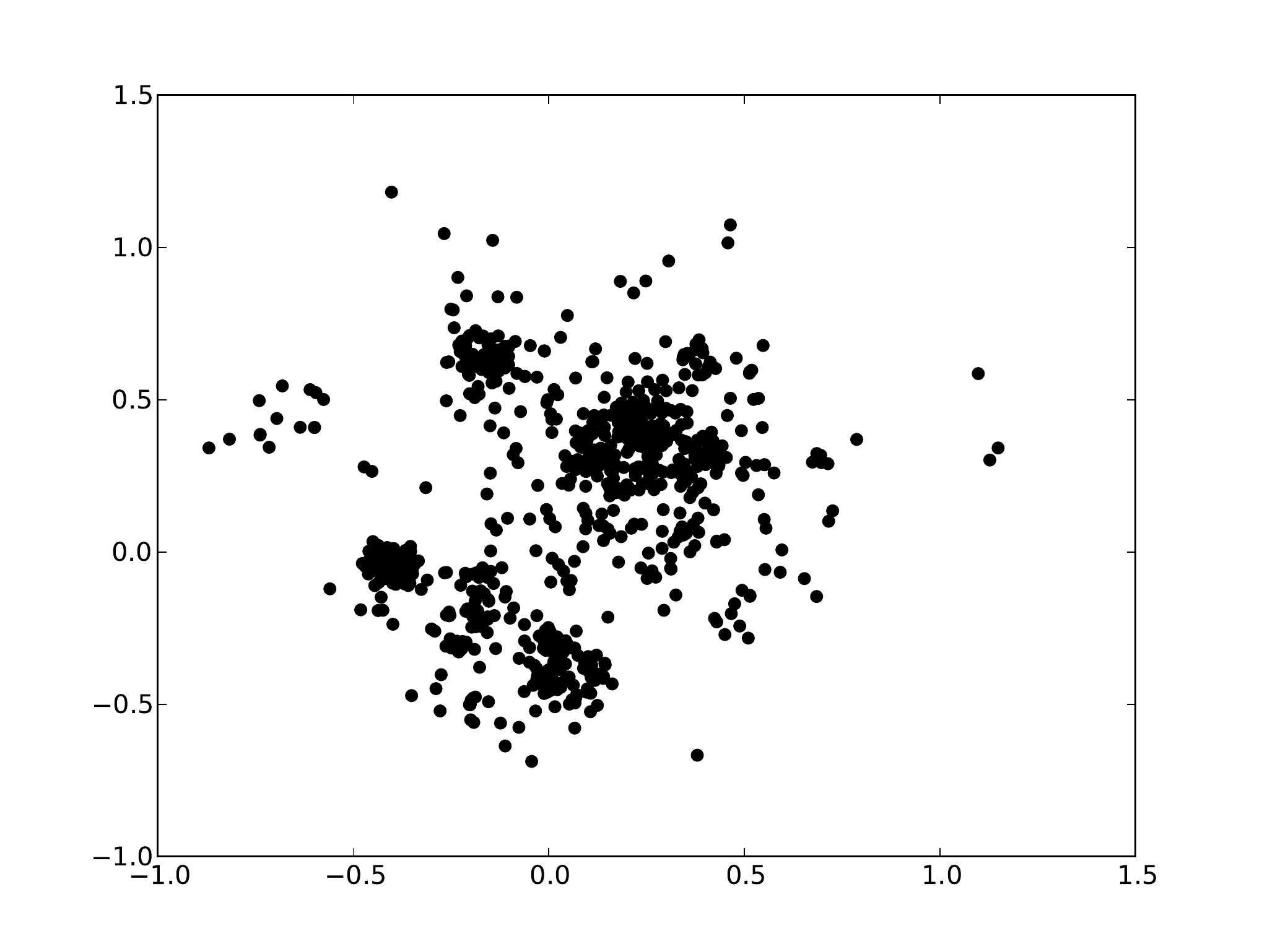}}
  \caption{Samples from the Pitman-Yor Diffusion Tree with $N=1000$ datapoints in $D=2$ dimensions and $a(t)=c/(1-t)$. As $\alpha$ increases more obvious clusters appear.}
  \label{fig:samples}
\end{figure}

\section{Model} \label{sec:model}

To complete the model we must specify a likelihood function for the data given the leaf locations of the PYDT, and priors on the hyperparameters. We use a Gaussian observation model for multivariate continuous data and a probit model for binary vectors. We specify the following priors on the hyperparameters:
\begin{align*}
\alpha \sim \text{G}(a_\alpha,b_\alpha) &\qquad
\beta \sim \text{Beta}(a_\beta,b_\beta) \\
c \sim \text{G}(a_c,b_c) &\qquad
1/\sigma^2 \sim \text{G}(a_{\sigma^2}, b_{\sigma^2})
\end{align*}
where $G(a,b)$ is a Gamma distribution with shape $a$ and rate $b$. In all experiments we used $a_\alpha=2,b_\alpha=.5,a_\beta=1,b_\beta=1,a_c=1,b_c=1,a_{\sigma^2}=1, b_{\sigma^2}=1$.

\section{Inference} \label{sec:inference}

We propose two inference algorithms: an MCMC sampler and a more computationally efficient greedy EM algorithm. Both algorithms marginalize out the locations of internal nodes using belief propagation, and are capable of learning the hyperparameters $c,\sigma^2,\alpha$ and $\beta$ if desired. 

\subsection{MCMC sampler} 

We construct an MCMC sampler to explore the posterior over the tree structure, divergence times and hyperparameters. To sample the structure and divergence times a subtree is chosen uniformly at random to be detached (the subtree may be a single leaf node). To propose a new position in the tree for the subtree, we follow the procedure for generating a new sample on the remaining tree. The subtree is attached wherever divergence occurred, which may be on a segment, in which case a new parent node is created, or at an existing internal node, in which case the subtree becomes a child of that node. If divergence occurred at a time later than the divergence time of the root of the subtree we must repeat the procedure until this is not the case. The marginal likelihood of the new tree is calculated, marginalizing over the internal node locations, and excluding the structure and divergence time contribution since this is accounted for by having sampled the new location according to the prior. The ratio to the marginal likelihood for the original tree gives the Metropolis factor used to determine whether this move is accepted. Unfortunately it is not possible to slice sample the position of the subtree as in~\cite{Neal2003a} because of the atoms in the prior at each branch point. 

\paragraph{Smoothness hyperparameter $c$.} From Equation~\ref{eq:tprior} the Gibbs conditional for $c$ is
\begin{align}
\text{G}\left( a_c + |\mathcal{I}| 
, b_c + \sum_{i \in \mathcal{I}} J^{\alpha,\beta}_{n^i} \log{(1-t_i)} \right) \label{eq:updatec}
\end{align}
where $\mathcal{I}$ is the set of internal nodes of the tree. 

\paragraph{Data variance $\sigma^2$.} It is straightforward to sample $1/\sigma^2$ given divergence locations. Having performed belief propagation it is easy to jointly sample the divergence locations using a pass of backwards sampling. From Equation~\ref{eq:locs} the Gibbs conditional for the precision $1/\sigma^2$ is then
\begin{align}
\text{G}( a_{\sigma^2}, b_{\sigma^2} )
\prod_{[ab] \in \mathcal{S(T)}} \text{G}\left( D/2 + 1
, \frac{|| x_a-x_b ||^2}{2(t_b-t_a)}\right) \label{eq:updates2}
\end{align}
where $|| \cdot ||$ denotes Euclidean distance. 

\paragraph{Pitman-Yor hyperparameters $\alpha,\beta$.} We use slice sampling~\citep{Neal2003} to sample $\alpha$ and $\beta$. We reparameterize in terms of the logarithm of $\alpha$ and the logit of $\beta$ to extend the domain to the whole real line. The terms required to calculate the conditional probability are those in Equations~\ref{eq:struct} and~\ref{eq:times}. 

\subsection{Greedy Bayesian EM algorithm} 

As an alternative to MCMC here we use a Bayesian EM algorithm to approximate the marginal likelihood for a given tree structure, which is then used to drive a greedy search over tree structures, following our work in~\cite{knowles2011icml}. 

\paragraph{EM algorithm.} In the E-step, we use message passing to integrate over the locations and hyperparameters. In the M-step we maximize the lower bound on the marginal likelihood with respect to the divergence times. 
For each node $i$ with divergence time $t_i$ we have the constraints $t_p < t_i < \min{(t_l,t_r)}$ where $t_l,t_r,t_p$ are the divergence times of the left child, right child and parent of $i$ respectively. 

We jointly optimize the divergence times using LBFGS~\citep{lbfgs}. Since the divergence times must lie within $[0,1]$ we use the reparameterization $s_i=\log{[t_i/(1-t_i)]}$ to extend the domain to the real line, which we find improves empirical performance. From Equations~\ref{eq:locs} and~\ref{eq:tprior} the lower bound on the log evidence is a sum over all branches $[pi]$ of expressions of the form:
\begin{align}
 &(\langle c \rangle  J^{\alpha,\beta}_{\mathbf{n}^i} -1 ) \log{(1-t_i)} - \frac{D}2 \log{(t_i-t_p)} - \langle \frac1{\sigma^2} \rangle \frac{b_{[pi]}}{t_i-t_p} \label{eq:em}
 \end{align}
 where $b_{[pi]} = \frac12 \sum_{d=1}^D \mathbb{E}[(x_{di}-x_{dp} )^2]$, $x_{di}$ is the location of node $i$ in dimension $d$, and $p$ is the parent of node $i$. The full lower bound is the sum of such terms over all nodes. The expectation required for $b_{[pi]}$ is readily calculated from the marginals of the locations after message passing. Differentiating to obtain the gradient with respect to $t_i$ is straightforward so we omit the details. Although this is a constrained optimization problem (branch lengths cannot be negative) it is not necessary to use the log barrier method because the $1/(t_i-t_p)$ terms in the objective implicitly enforce the constraints. 

\paragraph{Hyperparameters.} We use variational inference to learn Gamma posteriors on the inverse data variance $1/\sigma^2$ and smoothness $c$. The variational updates for $c$ and $1/\sigma^2$ are the same as the conditional Gibbs distributions in Equations~\ref{eq:updatec} and~\ref{eq:updates2} respectively. We optimize $\alpha$ and $\beta$ by coordinate descent using golden section search on the terms in Equations~\ref{eq:struct} and~\ref{eq:times}.

\paragraph{Search over tree structures} The EM algorithm approximates the marginal likelihood for a fixed tree structure $\mathcal{T}$.  We maintain a list of $K$-best trees (typically $K=10$) which we find gives good empirical performance. Similarly to the sampler, we search the space of tree structures by detaching and re-attaching subtrees. We choose which subtree to detach at random. We can significantly improve on re-attaching at random by calculating the local contribution to the evidence that would be made by attaching the root of the subtree to the midpoint of each possible branch and at each possible branch point. We then run EM on just the three best resulting trees. We found construction of the initial tree by sequential attachment of the data points using this method to give very good initializations. 

\subsection{Predictive distribution} \label{sec:pred}

To calculate the predictive distribution for a specific tree we compute the distribution for a new data point conditioned on the posterior location and divergence time marginals. Firstly, we calculate the probability of diverging from each branch according to the data generating process described in Section~\ref{sec:genprocess}. Secondly we draw several (typically three) samples of when divergence from each branch occurs. Finally we calculate the Gaussian at the leaves resulting from Brownian motion starting at the sampled divergence time and location up to to $t=1$. This results in a predictive distribution represented as a weighted mixture of Gaussians. Finally we average the density from a number of samples from the sampler or the $K$-best trees found by the EM search algorithm. 

\subsection{Likelihood models}

Connecting our PYDT module to different likelihood models is straightforward: we use a Gaussian observation model and a probit model for binary vectors. The MCMC algorithm slice samples auxiliary variables and the EM algorithm uses EP~\citep{Minka2001a} on the probit factor, implemented using the runtime component of the Infer.NET framework~\cite{InferNET10}. 

\section{Results} \label{sec:results}

We present results on synthetic and real world data, both continuous and binary. 

\subsection{Synthetic data}

We first compare the PYDT to the DDT on a simple synthetic dataset with $D=2,N=100$, sampled from the density 
$$ f(x,y)= \frac14 \sum_{\bar{x} \in [-1,1]} \sum_{\bar{y} \in [-1,1]} N(x;\bar{x},1/8)N(y;\bar{y},1/8)$$
The optimal trees learnt by 100 iterations of the greedy EM algorithm are shown in Figure~\ref{fig:synthetic}. While the DDT is forced to arbitrarily choose a binary branching structure over the four equi-distant clusters, the PYDT is able to represent the more parsimonious solution that the four clusters are equally dependent. Both models find the fine detail of the individual cluster samples which may be undesirable; investigating whether learning a noise model for the observations alleviates this problem is a subject of future work. 

\begin{figure} 
\centering
    \subfigure[DDT]
    {\includegraphics[width=.49\columnwidth]{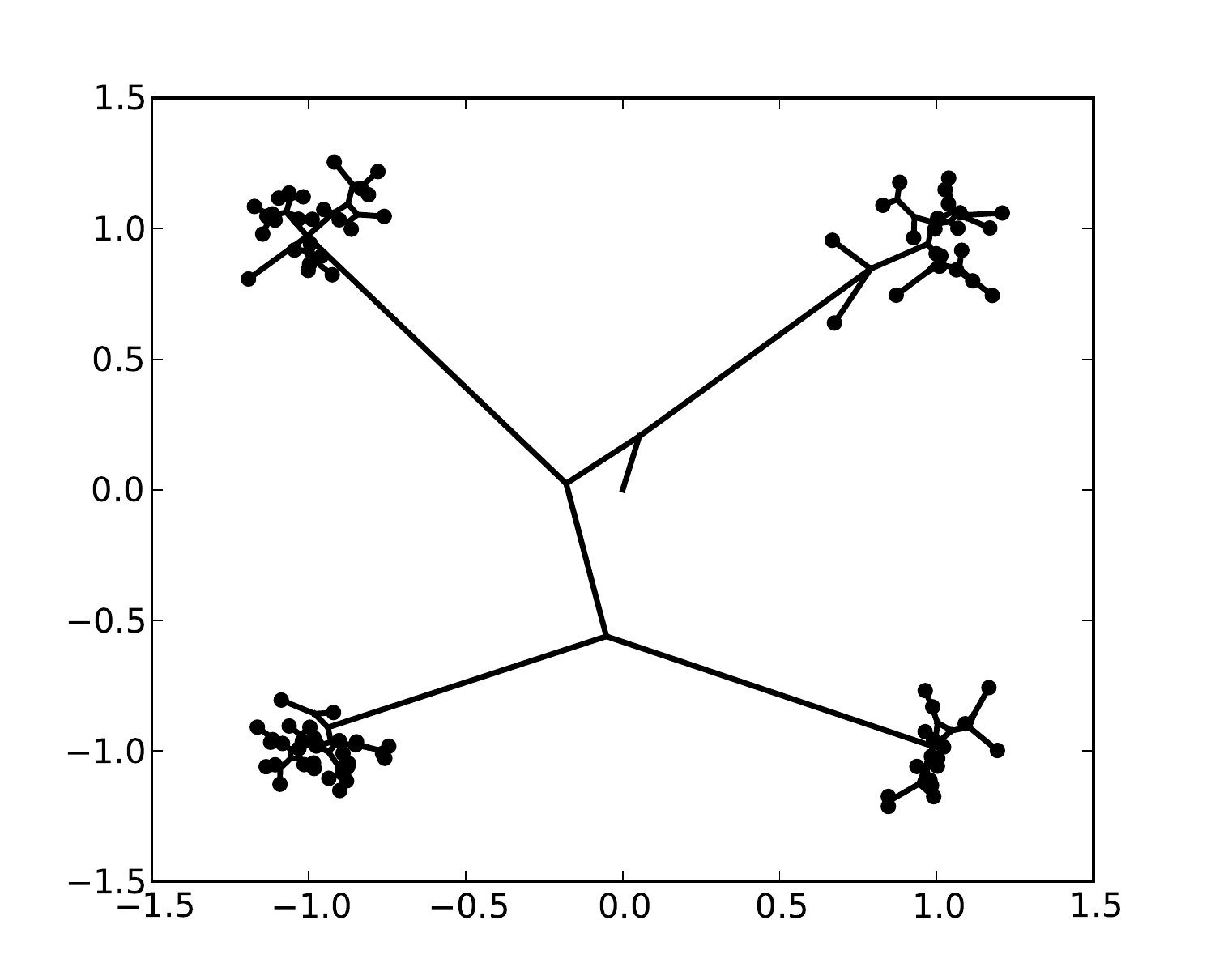}}
    \subfigure[PYDT]
    {\includegraphics[width=.49\columnwidth]{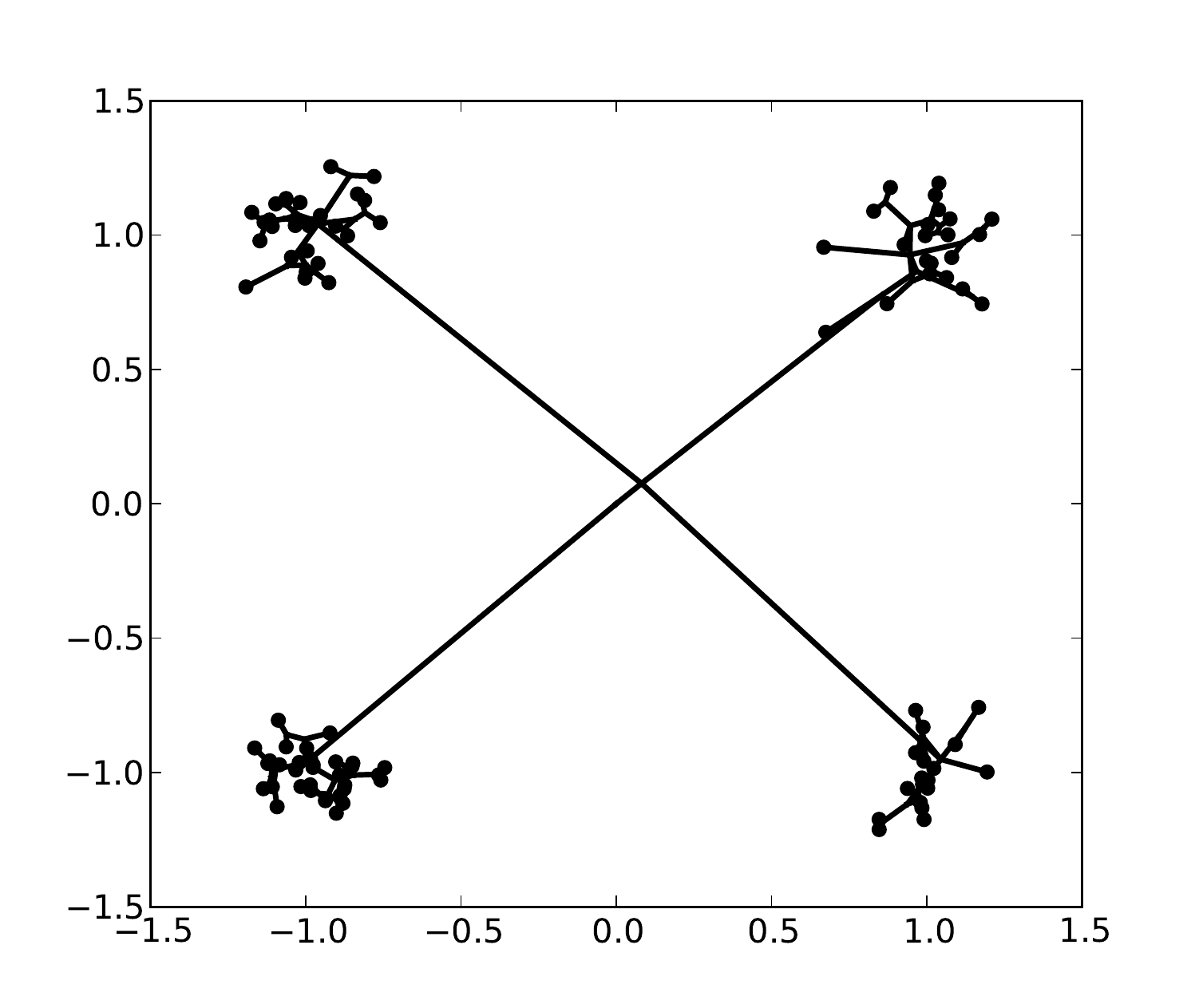}}
  \caption{Optimal trees learnt by the greedy EM algorithm for the DDT and PYDT on a synethic dataset with $D=2, N=100$.}
  \label{fig:synthetic}
\end{figure}

\subsection{Density modeling}

In~\cite{MackayGDPS} the DDT was shown to be an excellent density model on a $D=10,N=228$ dataset of macaque skull measurements, outperforming a kernel density and infinite mixture of Gaussians, and sometimes the Gaussian process density sampler itself. We compare the PYDT to the DDT on the same dataset, using the same data preprocessing and same three train test splits ($N_\text{train}=200,N_\text{test}=28$) as~\cite{MackayGDPS}. The performance using the MCMC sampler is shown in Figure~\ref{fig:macaques}. The PYDT finds trees with higher marginal likelihood than the DDT, which corresponds to a moderate improvement in predictive performance. Inference in the PYDT is actually slightly more efficient computationally than in the DDT because the on average smaller number of internal nodes reduces the cost of belief propagation over the divergence locations, which is the bottleneck of the algorithm. 

\begin{figure} 
\centering
    \includegraphics[width=.8\columnwidth]{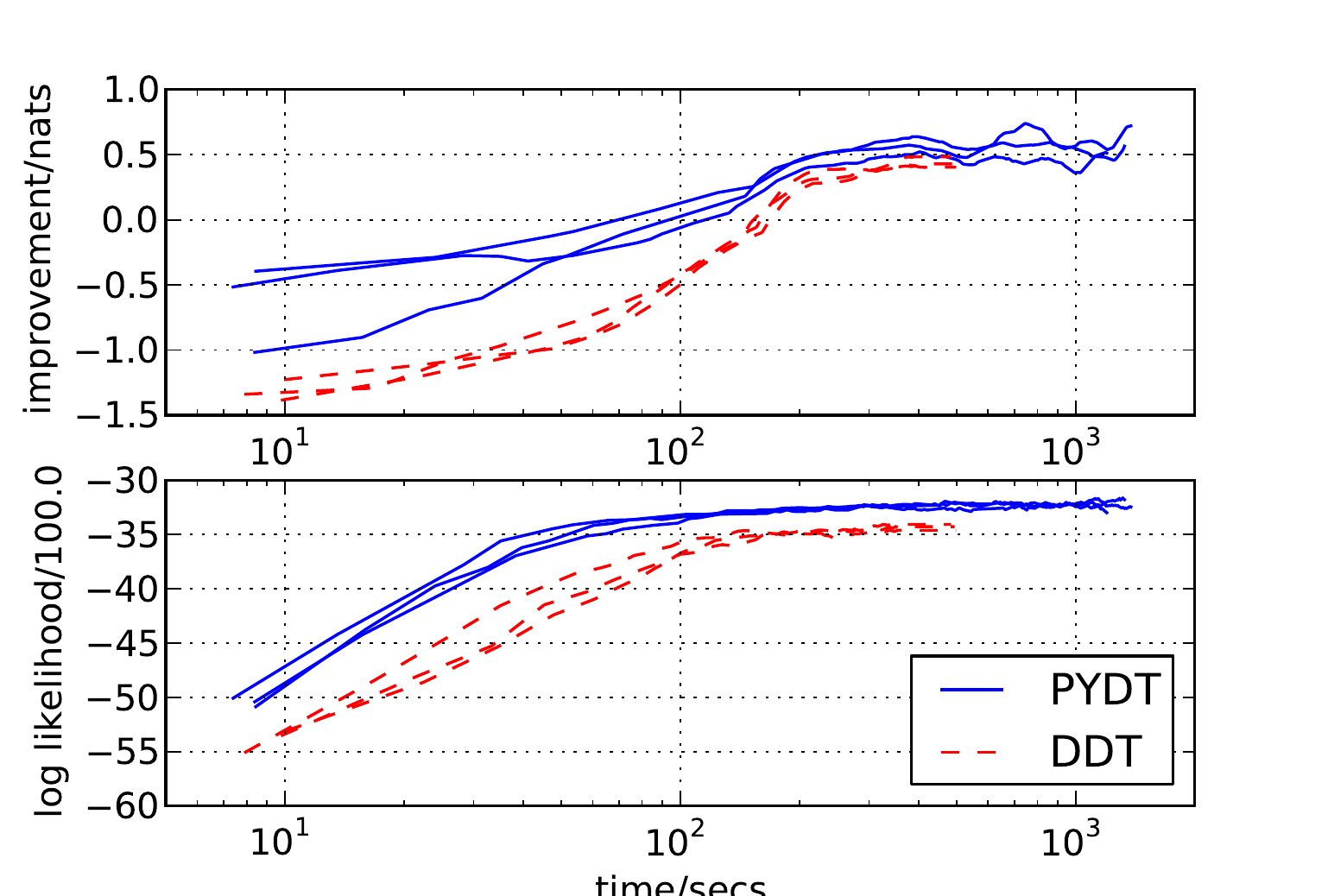}
  \vspace{-.2cm}
  \caption{Density modeling of the $D=10,N=200$ macaque skull measurement dataset of~\cite{MackayGDPS}. \emph{Top}: Improvement in test predictive likelihood compared to a kernel density estimate. \emph{Bottom}: Marginal likelihood of current tree. The shared x-axis is computation time in seconds.}
  \vspace{-.2cm}
  \label{fig:macaques}
\end{figure}

\subsection{Binary example}

To demonstrate the use of an alternative observation model we use a probit observation model in each dimension to model 102-dimensional binary feature vectors relating to attributes (e.g. being warm-blooded, having two legs) of 33 animal species from~\cite{structuralformtenenbaum}. The MAP tree structure learnt using EM, is shown in Figure~\ref{fig:animals}, is intuitive, with subtrees corresponding to land mammals, aquatic mammals, reptiles, birds, and insects (shown by colour coding). Note that penguins cluster with aquatic species rather than birds, which is not surprising since there are attributres such as ``swims'', ``flies'' and ``lives in water''. 

\begin{figure} 
\centering
    \includegraphics[width=.65\columnwidth,clip,trim=0 0 0 0]{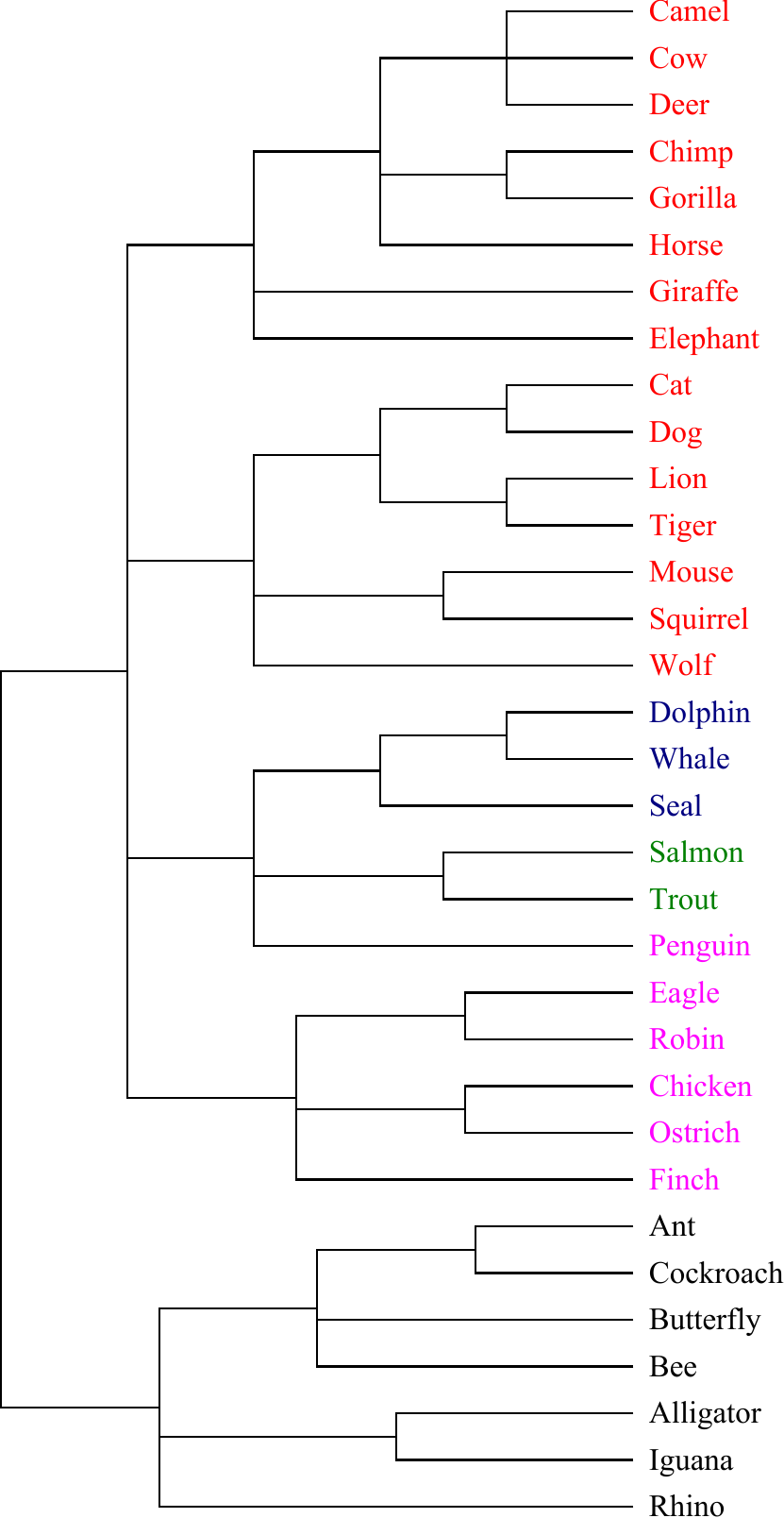}
  \vspace{-.3cm}
  \caption{Tree structure learnt for the animals dataset of~\cite{structuralformtenenbaum}.}
  \vspace{-.4cm}
  \label{fig:animals}
\end{figure}

\section{Conclusion}

We have introduced the Pitman-Yor Diffusion Tree, a Bayesian nonparametric prior over tree structures with arbitrary branching structure at each branch point. We have shown the PYDT defines an infinitely exchangeable distribution over data points. We demonstrated an MCMC sampler and Bayesian EM with greedy search, both using message passing on the tree structure. More advanced MCMC methods could be of use here. Quantitatively we have shown a modest improvement relative to the DDT on a density estimation task. However, we see improved interpretability as the key benefit of removing the restriction to binary trees, especially since hierarchical clustering is typically used as a data exploration tool. Qualitatively, we have shown the PYDT can find simpler, more interpretable representations of data than the DDT. To encourage the use of the PYDT by the community we will make our code publicly available. 

In ongoing work we use the PYDT to learn hierarchical structure over latent variables in models including Hidden Markov Models, specifically in part of speech tagging~\citep{Kupiec1992225} where a hierarchy over the latent states aids interpretability, and Latent Dirichlet Allocation, where it is intuitive that topics might be hierarchically clustered~\citep{Blei2004}. Another interesting direction would be to use the prior over branching structures implied by the PYDT in the annotated hierarchies model of~\cite{Roy2007}. 

\appendix

\section{Probability of tree structure} \label{app:probstruct}

For segment $[ab]$, recall that $i$ is the index of the sample which created the branch point at $b$. Thus $i-1$ samples did not diverge at $b$ so do not contribute any terms. Let the final number of branches from $b$ be $K_b$, and the number of samples which followed each branch be $\mathbf{n}^k:=\{n_k^b : k \in [1 \dots K_b]\}$. The probability of the $i$-th sample having diverged to form the branch point is $\frac{a(t_b)\Gamma(i-1-\beta)}{\Gamma(i+\alpha)}$. Now we wish to calculate the probability of thefinal branching structure at $b$. Following the divergence of sample $i$ there are $K_b-2$ samples who form new branches from the same point, contributing $\alpha+(k-1)\beta$ to the numerator for $k \in \{3,\dots,K_b\}$. Let $c_l$ be the number of samples having previously followed path $l$, so that $c_l$ ranges from $1$ to $n_l^b-1$ (apart from $c_1$ which only ranges from $i-1$ to $n_1^b-1$). The $j$-th sample contributes a factor $j-1+\alpha$ to the denominator. The total number of datapoints which  traversed edge $[ab]$ is $m(b)=\sum_{j=1}^{K_b} n_k^b$. The factor associated with this branch point is then: 
\begin{align*}
& \frac{\prod_{k=3}^{K_b} [ \alpha + (k-1) \beta]  \prod_{c_1=i-1}^{n^b_1-1} (c_1-\beta)  \prod_{l=2}^{K_b} \prod_{c_l=1}^{n_l^b-1} (c_l-\beta)}{\prod_{j=i+1}^{m(b)} (j-1+\alpha) } \\ 
& = \frac{\prod_{k=3}^{K_b} [ \alpha + (k-1) \beta]  \prod_{l=1}^{K_b} \prod_{c_l=1}^{n_l^b-1} (c_l-\beta)}
{\prod_{j=i+1}^{m(b)} (j-1+\alpha) \prod_{c_1=1}^i (c_1-\beta) }
\\ 
& = \frac{\prod_{k=3}^{K_b} [ \alpha + (k-1) \beta] \Gamma(i+\alpha) \prod_{l=1}^{K_b} \Gamma(n_l^b-\beta)}
{ \Gamma(m(b)+\alpha) \Gamma(i-1+\beta)  }
\end{align*}

\bibliographystyle{apa}
\bibliography{main}

\end{document}